\theoremstyle{definition}
\newtheorem{definition}{Definition}
\theoremstyle{plain}
\newtheorem{theorem}{Theorem}
\newtheorem{lemma}{Lemma}
\begin{document}

\title{A Gradient Free  Neural Network Framework Based on Universal Approximation Theorem}

\author{N. P. Bakas,
        A. Langousis,
        M. Nicolaou,
        and~S. A. Chatzichristofis
\thanks{N. P. Bakas and Mihalis Nicolaou are with the Computation-based Science and Technology Research Center, The Cyprus Institute, 20 Konstantinou Kavafi Street, 2121, Aglantzia Nicosia, Cyprus. e-mail: (n.bakas@cyi.ac.cy, m.nicolaou@cyi.ac.cy}
\thanks{ A. Langousis is with the Department of Civil Engineering, University of Patras, 265 04 Patras, Greece.}
\thanks{ S. A. Chatzichristofis is with the Intelligent Systems Lab \&  Department of Computer Science, Neapolis University Pafos, 2 Danais Avenue, 8042 Pafos, Cyprus.}
\thanks{Manuscript received August 03, 2020; revised xxx}}

\markboth{Journal of \LaTeX\ Class Files,~Vol.~14, No.~8, August~2015}%
{Shell \MakeLowercase{\textit{et al.}}: Bare Demo of IEEEtran.cls for IEEE Journals}

\maketitle

\begin{abstract}
We present a numerical scheme for computation of Artificial Neural Networks (ANN) weights, which stems from the Universal Approximation Theorem, avoiding laborious iterations. The proposed algorithm adheres to the underlying theory, is highly fast, and results in remarkably low errors when applied for regression and classification of complex data-sets, such as the Griewank function of multiple variables $\mathbf{x} \in \mathbb{R}^{100}$ with random noise addition, and MNIST database for handwritten digits recognition, with $7\times10^4$ images. The same mathematical formulation is found capable of approximating highly nonlinear functions in multiple dimensions, with low errors (e.g. $10^{-10}$) for the test-set of the unknown functions, their higher-order partial derivatives,  as well as numerically solving Partial Differential Equations. The method is based on the calculation of the weights of each neuron in small neighborhoods of the data, such that the corresponding local approximation matrix is invertible. Accordingly, optimization of hyperparameters is not necessary, as the number of neurons stems directly from the dimensionality of the data, further improving the algorithmic speed. Under this setting, overfitting is inherently avoided, and the results are interpretable and reproducible. The complexity of the proposed algorithm is of class P with $\mathcal{O}(mn^2)+\mathcal{O}(\frac{m^3}{n^2})-\mathcal{O}(\log(n+1))$ computing time, with respect to the observations $m$ and features $n$, in contrast with the NP-Complete class of standard algorithms for ANN training. The performance of the method is high, irrespective of the size of the dataset, and the test-set errors are similar or smaller than the training errors, indicating the generalization efficiency of the algorithm. A supplementary computer code in Julia Language is provided, which can be used to reproduce the validation examples, and/or apply the algorithm to other datasets.
\end{abstract}

\begin{IEEEkeywords}
Artificial Neural Networks, Learning Algorithms, Classification (of Information), Regression Analysis, Radial Basis Function Networks, Partial Differential Equations
\end{IEEEkeywords}

\section{Introduction}

Although Artificial Intelligence (AI) has been broadening its numerical methods and extending its fields of application, empirical rigor is not following such advancements \cite{sculley2018winner's}, with researchers questioning the accuracy of iterative algorithms \cite{Hutson2018}, as the obtained results for a certain problem are not always reproducible \cite{Hutson725,belthangady2019applications}. In theory, Artificial Neural Networks (ANN) are capable of approximating any continuous function \cite{Hassoun2005} but, apart from existence, the theory alone cannot conclude on a universal approach to calculate an optimal set of ANN model parameters, also referred to as weights. Along these lines, iterative optimization algorithms \cite{ruder2016overview} are usually applied to reach an optimal set of ANN weights ${{w}}$, which minimize the total error of model estimates. Note, however, that apart from trivial cases rarely met in practice, the optimization problem has more than one local minima, and its solution requires multiple iterations that increase significantly the computational load. To resolve this issue, enhanced optimization methods such as stochastic gradient descent \cite{bottou2010large,johnson2013accelerating}, have been proposed. Another common issue in ANN applications is that of overfitting, which relates to the selection of a weighting scheme that approximates well a given set of data while failing to generalize the accuracy of the predictions beyond the training set. To remedy overfitting problems, several methods have been proposed and effectively applied, such as dropout \cite{srivastava2014dropout}. Additional, and probably more important concerns regarding effective application of ANN algorithms, are a) the arbitrary selection of the number $N$ of computational Neurons, which may result in an unnecessary increase of the computational time, and b) the optimization of the hyper-parameters of the selected ANN architecture \cite{bergstra2011algorithms,bergstra2012random,Feurer2019}, which corresponds to solving an optimization problem with objective function values determined by the solution of another optimization problem, that is the calculation of ANN weights for a given training set.\par
The purpose of this work is to develop a numerical scheme for the calculation of the optimal weights ${{w}}$, the number of Neurons $N$, and other parameters of ANN algorithms, which relies on theoretical arguments, in our case the Universal Approximation Theorem and, at the same time, being fast and precise. This has been attained without deviating from the classical ANN representation, by utilizing a novel numerical scheme, dividing the studied data-set into small neighborhoods, and performing matrix manipulations for the calculation of the sought weights. The numerical experiments exhibit high accuracy, attaining remarkably low errors in the test-set of known datasets such as MNIST for computer vision, and complex nonlinear functions for regression, while the computational time is kept low. Interestingly, the same Algorithmic scheme may be applied to approximate the solution of Partial Differential Equations (PDEs), appearing in Physics, Engineering, Financial Sciences, etc. The paper is organized as follows. In section \ref{sec:The ANNbN Method}, we present the general formulation of the suggested method, hereafter referred to as ANNbN (Artificial Neural Networks by Neighborhoods). More precisely, the basic formulation of the ANNbN approach is progressively developed in sections \ref{sec:k-cluster}, and \ref{sec:all-obs}, Section \ref{sec:radial} extends the method for the case when radial basis functions are utilized, while Sections \ref{sec:derivatives} and \ref{sec:pdes} implement the method for approximation of derivatives, and solution of PDEs, respectively. Section \ref{sec:deep} transforms the original scheme to Deep Networks, and Section \ref{sec:ensembles} to Ensembles of ANNs. The results of the conducted numerical experiments are presented and discussed in Section \ref{sec:Validation Results}. Conclusions and future research directions are presented in Section \ref{sec:Discussion}. An open-source computer code written in Julia \cite{bezanson2017julia} programming Language is available at \url{https://github.com/nbakas/ANNbN.jl}.

\section{Artificial Neural Networks by Neighborhoods (ANNbN)}
\label{sec:The ANNbN Method}

Let ${{x}_{ij}}$ be some given data of $j\in \left\{ 1,2,\ldots ,n \right\}$ input variables in $i\in \left\{ 1,2,\ldots ,m \right\}$ observations of $y_i$ responses. The Universal Approximation Theorem \cite{Cybenko1989, Tadeusiewicz1995}, ensures the existence of an integer $N$, such that
    \[y_i \cong {{f}_{i}}({{x}_{i1}},{{x}_{i2}},\ldots ,{{x}_{in}})= \sum\limits_{k=1}^{N}{{{v}_{k}}}\sigma \left( \sum\limits_{j=1}^{n}{{{w}_{jk}}{{x}_{ij}}}+{{b}_{k}} \right)+{{b}_{0}},\]
with approximation errors $\epsilon_i=y_i-f_i$ among the given response $y_i$ and the corresponding simulated $f_i$, arbitrarily low. $N$ is the number of Neurons, ${{w}_{jk}}$ and ${{b}_{k}}$ denote the local approximation weights and bias terms, respectively, of the linear summation conducted for each neuron $k$, and ${{v}_{k}},{{b}_{0}}$ correspond to the global approximation weights and bias terms, respectively, of the linear summation upon all neurons. $\sigma$ is any sigmoid function, as presented below in Section \ref{sec:k-cluster}.
\par 
The suggested ANNbN (Artificial Neural Networks by Neighborhoods) method is based on segmentation of a given dataset into smaller clusters of data, so that each cluster $k$ is representative of the local neighborhood of $y_{ik}$ responses and, subsequently, uses the weights ${{w}_{jk}}$ calculated for each cluster to derive the global weights $v$ of the overall approximation. To conclude on the neighborhoods (i.e. the proximity clusters) of the response observations $y_{i}$, we use the well known $k$-means clustering algorithm (see e.g. \cite{macqueen1967some,hartigan1979algorithm} and $k$-means++ for the initial seed (\cite{arthur2007k}). Any other clustering algorithm can be utilized as well, while by supplying the initial seed, the obtained results are always reproducible. It is worth mentioning that the method works well even without clustering of the data. Clustering adds significant computational load, especially for large datasets, however as presented in Table \ref{tab:MNIST} for the MNIST dataset, ANNbN yields prevalent results even without clustering.

\subsection{Basic Formulation for Shallow Networks}
\label{sec:shallow}
Figure \ref{fig:ANNbN} illustrates the calculation process for the ANNbN weights. Contrary to the regular ANN approach where all responses $y_{i}$ are treated in a single step as a whole, the ANNbN method first splits the responses into proximity clusters, calculates the weights $w_{jk}$ in each cluster $k$ using the responses $y_{ik}$ and corresponding input data $x_{ijk}$, and subsequently uses the derived weights ${{w}_{jk}}$ for each cluster to calculate the global weights $v$ of the overall approximation. The aforementioned two step approach is detailed in Sub-Sections \ref{sec:k-cluster}, \ref{sec:all-obs} below.

\begin{figure*}[ht]
    \centering
    \includegraphics[scale=1.2]{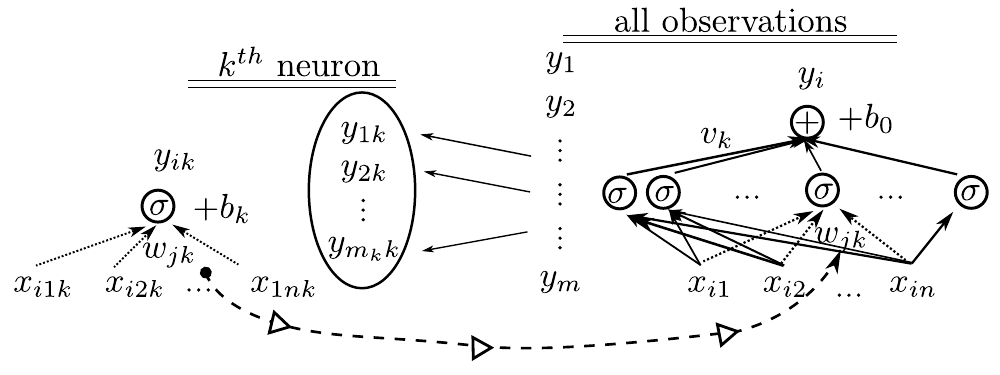}
    \caption{Illustration of the numerical procedure to calculate ANNbN local and global weights: Initial calculation of local weights $w_{jk}$ for each neuron $k$  (left panel), and subsequent calculation of the global weights $v_k$ of the entire network (right panel).}
    \label{fig:ANNbN}
\end{figure*}

\subsubsection{Calculation of $w_{jk}$ and $b_k$ in the $k^{th}$ cluster}
\label{sec:k-cluster}

Let $m_k$ be the observations found in the $k^{th}$ cluster, with $\sum_{k=1}^N m_k=m$, $\sigma$ the sigmoid function, which may be selected among the variety of sigmoids, such as $\sigma (x)={\frac {1}{1+e^{-x}}}$, and $\sigma^{-1}$ the inverted sigmoid, $ \sigma^{-1}(y)=\log \left({\frac {y}{1-y}}\right)$. Within the $k^{th}$ cluster, we may write
\newenvironment{spmatrix}[1]
 {\def\mysubscript{#1}\mathop\bgroup\begin{pmatrix}}
 {\end{pmatrix}\egroup_{\textstyle\mathstrut\mysubscript}}
\[
\begin{split}
\begin{pmatrix}
    \sigma(x_{11k}w_{1k} + x_{12k}w_{2k} + \dots + x_{1nk}w_{nk} + b_k)\\
    \sigma(x_{21k}w_{1k} + x_{22k}w_{2k} + \dots + x_{2nk}w_{nk} + b_k)\\
    \vdots \\
    \sigma(x_{{m_k}1k}w_{1k} + x_{{m_k}2k}w_{2k} + \dots + x_{{m_k}nk}w_{nk} + b_k)\\
\end{pmatrix}
= \\
\begin{pmatrix}
    y_{{1}k},   
    y_{{2}k},   
    \hdots, 
    y_{{m_k}k}
\end{pmatrix}^{T},
\end{split}
\]
and by utilizing the inverse sigmoid function $\sigma^{-1}$, and writing the left part of the Equation in matrix form, we deduce that
\begin{equation}
\begin{split}
\begin{spmatrix}{\mathbf X_k}
    x_{{1}1k} & x_{{1}2k} & \dots & x_{{1}nk} & 1\\
    x_{{2}1k} & x_{{2}2k} & \dots & x_{{2}nk} & 1\\
    \vdots &\vdots &\ddots &\vdots & 1\\
    x_{{m_k}1k} & x_{{m_k}2k} & \dots & x_{{m_k}nk} & 1\\
\end{spmatrix}
\begin{spmatrix}{\mathbf w_k}
    w_{1k} \\
    w_{2k}  \\
    \vdots\\
    w_{nk}\\
    b_k
\end{spmatrix}  \\
=
\begin{spmatrix}{\mathbf {\hat{y}}_k}
    \sigma^{-1}(y_{{1}k})  \\
    \sigma^{-1}(y_{{2}k})  \\
    \vdots\\
    \sigma^{-1}(y_{{m_k}k})
\end{spmatrix},
\end{split}
\label{eq:system-m-cluster}
\end{equation}

where $\mathbf {\hat{y}}_k =\sigma^{-1}(y_{{i}k})$, with ${i} \in \{1,2,...,m_k\}$, and $m_k$ the observations found in the $k^{th}$ cluster. For distinct observations, with $x_{{m_i}j} \neq 1 \forall m_i$, the matrix $\mathbf X_k$ is of full row rank, and if we construct clusters with $m_k = n+1$, the matrix $\mathbf X_k$ is invertible, and the system in Equation \ref{eq:system-m-cluster} has a unique solution. Hence, because the dimensions of $\mathbf X_k$ are small $(m_k<<m)$, we may rapidly calculate the approximation weights $\mathbf w_k$ (Figure \ref{fig:ANNbN} left) in the $k^{th}$ cluster (corresponding to the $k^{th}$ neuron) by
\begin{equation}
    \mathbf w_k=\mathbf X_k^{-1} \mathbf {\hat{y}}_k.
\end{equation}
If the clusters are not equally sized, we may solve numerically Equation \ref{eq:system-m-cluster}, by utilizing any appropriate algorithm for the solution of Linear systems, e.g. $\mathbf{w_k} =\mathbf{X}^{+} \mathbf{y} +(\mathbf I-\mathbf{X}^{+} \mathbf{X}) { \omega}$, where $X^{+}$ is the pseudo-inverse, and ${\omega}$ the vector of free parameters.

\subsubsection{Calculation of $v_k$ and $b_0$ exploiting all the given observations}
\label{sec:all-obs}

Following the computation of the weights $\mathbf w_k$, for each neuron $k$ in the hidden layer, we may write for all the neurons connected with the external layer that

\begin{equation}
\sigma \odot 
    \begin{spmatrix}{\mathbf O}
        \mathbf X \mathbf w_1 \quad
        \mathbf X \mathbf w_2 \quad
        \dots \quad
        \mathbf X \mathbf w_N \quad
        \mathbf 1
    \end{spmatrix}
    \begin{spmatrix}{\mathbf v}
        v_1 \\
        v_2 \\
        \vdots\\
        v_N \\
        b_0
    \end{spmatrix}
    =
    \begin{spmatrix}{\mathbf y}
        y_1  \\
        y_2  \\
        \vdots\\
        y_m
    \end{spmatrix},
    \label{eq:out-layer}
\end{equation}

where $\mathbf{1}=\left\{ 1,1,\ldots ,1 \right\}^{T}$, with length $m$, and $\mathbf X$ is the matrix containing the entire sample; in contrast with the previous step that utilized $\mathbf X_k$ containing the observations in cluster $k$. The symbol $\odot$ implies the element-wise application of $\sigma$ to each $\mathbf X \mathbf w_k$. By solving the system of Equations (\ref{eq:out-layer}), we may compute the weights $\mathbf v$. In the numerical experiments, the local approximation weights $\mathbf{w}j$ are distinct, while the number of neurons is usually smaller than the number of observations ($N<m$), hence it was found numerically fast to solve Equation \ref{eq:out-layer} by

\begin{equation}
    \mathbf v=(\mathbf O^{T}\mathbf O)^{-1} \mathbf O^{T}\mathbf y,
    \label{eq:calc-v}
\end{equation}

and obtain the entire representation of the ANNbN.

\subsection{ANNbN with Radial Basis Functions as Kernels}
\label{sec:radial}
The method was further expanded by using Radial Basis Functions (RBFs) for the approximation, $\varphi (r)$, depending on the distances among the observations $r$, instead of their raw values (Figure \ref{fig:radial}), again in the clusters of data, instead of the entire sample. A variety of studies exist on the approximation efficiency of RBFs \cite{Yiotis2015,Babouskos2015}, however they refer to noiseless data, and the entire sample, instead of neighborhoods. We should also distinguish this approach of RBFs implemented as ANNbN, with the Radial Basis Function Newtorks \cite{schwenker2001three,park1991universal}, with $\varphi ({\mathbf  {x}})=\sum _{{i=1}}^{N}a_{i}\varphi (||{\mathbf  {x}}-{\mathbf  {c}}_{i}||)$, where the centers ${\mathbf  {c}}_{i}$ are the clusters' means - instead of collocation points, $N$ is the number of neurons, and $\alpha_i$ are calculated by training, instead of matrix manipulation. In the proposed formulation, the representation regards the distances $r_{ijk}$ (Figure \ref{fig:radial}) among all the observation ${\mathbf {x}}_{ik}=\{x_{i1k},x_{i2k},\dots,x_{ink}\}$ in cluster $k$ with dimension (features) $n$, and ${i} \in \{1,2,...,m_k\}$, and another observation in the same cluster ${\mathbf {x}}_{jk}=\{x_{j1k},x_{j2k},\dots,x_{jnk}\}$, with ${j} \in \{1,2,...,m_k\}$. Accordingly, we may approximate the responses in the $k^{th}$ cluster $y_{ik}$, by

\begin{equation}
\resizebox{0.95\hsize}{!}{$
\begin{split}
    \begin{spmatrix}{\pmb\varphi_k}
    \varphi (\|\mathbf x_{1k}-\mathbf x_{1k}\|)&\varphi (\|\mathbf x_{2k}-\mathbf x_{1k}\|)&\dots &\varphi (\|\mathbf x_{{m_k}k}-\mathbf x_{1k}\|)\\
    \varphi (\|\mathbf x_{1k}-\mathbf x_{2k}\|)&\varphi (\|\mathbf x_{2k}-\mathbf x_{2k}\|)&\dots &\varphi (\|\mathbf x_{{m_k}k}-\mathbf x_{2k}\|)\\
    \vdots &\vdots &\ddots &\vdots \\
    \varphi (\|\mathbf x_{1k}-\mathbf x_{{m_k}k}\|)&\varphi (\|\mathbf x_{2k}-\mathbf x_{{m_k}k}\|)&\dots &\varphi (\|\mathbf x_{{m_k}k}-\mathbf x_{{m_k}k}\|)\\
    \end{spmatrix} \\
    {\begin{spmatrix}{\mathbf w_{k}}w_{1k}\\w_{2k}\\\vdots \\w_{{m_k}k}\end{spmatrix}}= 
    {\begin{spmatrix}{\mathbf y_k}
    y_{1k}  \\
    y_{2k}  \\
    \vdots\\
    y_{{m_k}k}\end{spmatrix}},
    \label{eq:RBF}
\end{split}
$}
\end{equation}
and compute $\mathbf w_k$, by 
\[\mathbf w_k=\pmb \varphi^{-1}_k \mathbf y_k.\]

The elements $\varphi_{ij}=\varphi(\|\mathbf x_{jk}-\mathbf x_{ik}\|)$, of matrix $\pmb \varphi_k$ denotes the application of function $\varphi$ to the Euclidean Distances (or norms) of the observations in the $k^{th}$ cluster. Note that vector $\mathbf w_{k}$ has length $m_k$ for each cluster $k$, instead of $n$ for the sigmoid approach. Afterwards, similar to the sigmoid functions, we obtain the entire representation for all clusters, similar to Equations \ref{eq:out-layer},\ref{eq:calc-v}, for the weights of the output layer $\mathbf v$, by solving

\begin{equation}
    \begin{spmatrix}{\mathbf O}
        \hat{\pmb\varphi}_1 \mathbf w_1 \quad
        \hat{\pmb\varphi}_2 \mathbf w_2 \quad
        \dots \quad
        \hat{\pmb\varphi}_N \mathbf w_N \quad
        \mathbf 1
    \end{spmatrix}
    \begin{spmatrix}{\mathbf v}
        v_1 \\
        v_2 \\
        \vdots\\
        v_N \\
        b_0
    \end{spmatrix}
    =
    \begin{spmatrix}{\mathbf y}
        y_1  \\
        y_2  \\
        \vdots\\
        y_m
    \end{spmatrix},
    \label{eq:out-layer-rbf}
\end{equation}

similar to Equation \ref{eq:out-layer}, the rows of the matrices $\hat{\pmb\varphi}_1, \hat{\pmb\varphi}_2, \dots$ contain the observations of the entire sample, whereas the columns the collocation points found in each cluster. For calculation of the weights, we use $\varphi_{ij}=\varphi(\|\mathbf x_{jk}-\mathbf x_{ik}\|)$.After computating the $\mathbf w_k$ and $\mathbf v$, one may interpolate for any new $\mathbf x$ (out-of-sample), using

\[\varphi_j(\mathbf x)=\varphi(\|\mathbf x_{jk}-\mathbf x\|),\]

where $\mathbf x_{jk}$ are the RBF collocation points for the approximation, same as in Equation \ref{eq:RBF}. Hence, we may predict for out of sample observations by using

\begin{equation}
    f(\mathbf x)=\sum^N_{k=1}\left(\sum^n_{j=1} w_{jk} \varphi_j(\mathbf x)\right) v_k + b_0.
    \label{eq:rbf-total}
\end{equation}

It is important to note that kernel $\varphi$ is applied to each element of matrices $\pmb \varphi_k$ (Equation \ref{eq:RBF}), instead of the total row, as per Equation \ref{eq:system-m-cluster}. Hence we don't need the inverted $\varphi^{-1}$ (corresponding to $\sigma^{-1}$ in Equation \ref{eq:system-m-cluster}), while $\mathbf w_k$ is applied directly by multiplication. This results in convenient formulation for the approximation of the derivatives, as well as the solution of PDEs. 

Due to Mairhuber–Curtis theorem \cite{10.2307/2033359}, matrix $\mathbf \varphi$ may be singular, and one should select an appropriate kernel for the data under consideration. Some examples of radial basis kernels are the Gaussian $\varphi (r)=e^{-r^2/c^2}$, Multiquadric $\varphi (r)={\sqrt {1+(c r)^{2}}}$, etc., where $r=\|\mathbf x_{j}-\mathbf x_{i}\|$, and the shape parameter $c$ controls the width of the function. $c$ may take a specific value or be optimized, to attain higher accuracy for the particular data-set studied. Accordingly with sigmoid functions, after the computation of $\mathbf w_k$, we use Equation \ref{eq:out-layer-rbf}, to compute $\mathbf{v}$, and obtain the entire representation.

\begin{figure}[ht]
    \centering
    \includegraphics[scale=1.1]{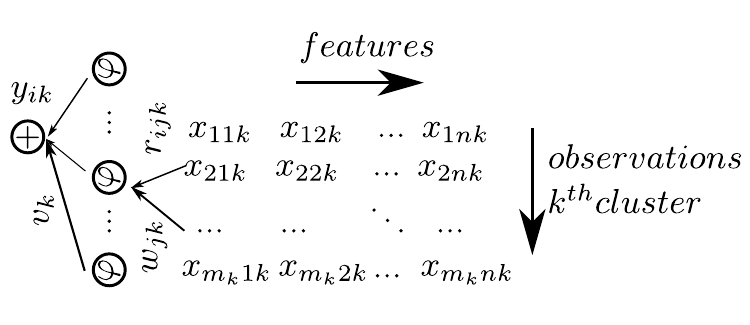}
    \caption{$k^{th}$ cluster of Radial ANNbN}
    \label{fig:radial}
\end{figure}

\subsection{ANNbN for the Approximation of Derivatives }
\label{sec:derivatives}
Equation \ref{eq:rbf-total}, offers an approximation to the sought solution, by using algebraic operations on the particular $\varphi_j (r)$, where
\begin{equation}
    r=\|\mathbf x_{j}-\mathbf x\|=\sum^{n}_{p=1}{(x_{jp}-x_p)^2},
    \label{eq:r}
\end{equation}

which is a differentiable function with respect of any out-of-sample $\mathbf x$, considering the $n$-dimensional collocation points $\mathbf x_{j}$ as constants. 

Accordingly, one may compute any higher-order derivative of the approximated function, by utilizing Equation \ref{eq:rbf-total}, and simply differentiating the kernel $\mathbf \varphi$, and multiplying by the computed weights $\mathbf w_k=w_{jk}$, for all $\mathbf x_{j}$. In particular, we may approximate the $l^{th}$ derivative with respect to the $p^{th}$ dimension, at the location of the $i^{th}$ observation  by 

\begin{equation}
\resizebox{0.55\hsize}{!}{$
    \frac{ {\partial}^l f_{i}}{\partial {x^l_{ip}}}=\begin{pmatrix}
        \frac{ {\partial}^l \varphi_{{i}{1}}}{\partial {x^l_{ip}}} \quad
        \frac{ {\partial}^l \varphi_{{i}{2}}}{\partial {x^l_{ip}}} \quad
        \dots \quad
        \frac{ {\partial}^l \varphi_{{i}{m_k}}}{\partial {x^l_{ip}}}
    \end{pmatrix}\mathbf{w_k},
    \label{eq:deriv-phi}
$}
\end{equation}
where
\[
\varphi_{{i}{j}}=\varphi_j(\mathbf x_i)=\varphi(\|\mathbf x_{jk}-\mathbf x_i\|),
\]

and
\begin{equation}
    \frac{ {\partial} \varphi_{{i}{j}}}{\partial {x_{ip}}}=
\frac{ {\partial} \varphi_{{i}{j}}}{\partial {r_{{i}{j}}}}
\frac{ {\partial} r_{{i}{j}}}{\partial {x_{ip}}},
\label{eq:partial-phi}
\end{equation}

where $\mathbf x_{jk}$ denote the collocation points of cluster $k$, and $\mathbf x_i$ the points where $f_i=f(\mathbf x_i)$ is computed. Since vector $\mathbf v$ applies by multiplication and summation to all $N$ clusters (Equation \ref{eq:rbf-total}), one may obtain the entire approximation for each partial derivative (i.e. by differentiating $\varphi$, and applying all $\mathbf w_k$ and vector $\mathbf v$, to $\frac{ {\partial}^l \varphi_{ij}}{\partial {x_{ip}}^l}$). The weights remain the same for the function and its derivatives. We should underline that the differentiation in Equation \ref{eq:partial-phi}, holds for any dimension $p\in \left\{ 1,2,\dots,n \right\}$ of $\mathbf x_i$, hence due to Equation \ref{eq:partial-phi}, with the same formulation, we derive the partial derivatives with respect to any variable, in a concise setting. 

For example, if one wants to approximate a function $f(x_1,x_2)$, and later compute its partial derivatives with respect to $x_1$, by utilizing the collocation points $\mathbf x_{j}$, we may write
\[
r=\left(\mathbf x_{j1} - x_{1}\right)^{2} + \left(\mathbf x_{j1} - x_{2}\right)^{2},
\]
and if we use as a kernel 
\[
\varphi(x_1,x_2)=-\frac{r^{4}}{4},
\]
we obtain
\[
\frac{ {\partial} \varphi (x_1,x_2)}{\partial {x_{1}}}
=-2 \left(\mathbf x_{j1} - x_{1}\right) r^{3},
\]
and hence
\[
\frac{ {\partial}^2 \varphi(x_1,x_2)}{\partial {x^2_{1}}}
=-2\left(\mathbf x_{j1} - x_{1}\right)
6 \left(\mathbf x_{j1} - x_{1}\right) 
r^{2} 
- 2 r^{3}.
\]

The variable $x_1$ may take values from the collocation points or any other intermediate point, after the weights' calculation, in order to produce predictions for out-of-sample observations. In empirical practice, we may select among the available in literature RBFs, try some new, or optimize their shape parameter $c$. In Appendix I, we also provide a simple computer code for the symbolic differentiation of any selected RBF, using SymPy \cite{meurer2017sympy} package. 

Particular interest exhibit the Integrated RBFs (IRBFs) \cite{Bakas2019,Babouskos2015,Yiotis2015}, which are formulated from the indefinite integration of the kernel, such that its derivative is the RBF $\varphi$. Accordingly, we may integrate for more than one time the kernel, to approximate the higher-order derivatives. For example, by utilizing $\text{erf}(x)=\frac{1}{\sqrt{\pi }}\int_{-x}^{x}{{{e}^{-{{t}^{2}}}}dt}$, and the two times integrated Gaussian RBF for $\varphi$ at collocation points $x_j$, 
\[
{{\varphi }_{j}}(x) =\frac{{{\text{c}}^{2}}{{\text{e}}^{\frac{-{{(x-{{x}_{j}})}^{2}}}{{{c}^{2}}}}}\text{+c}\sqrt{\text{ }\!\!\pi\!\!\text{ }}\text{ (}x-{{x}_{j}}\text{) erf}\frac{\text{(}x - {{x}_{j}}\text{)}}{c}}{2},
\]
we deduce that 
\[
 \frac{ {d} \varphi_{{j}}}{d {x}}=\frac{\text{c}\sqrt{\text{ }\!\!\pi\!\!\text{ }}\text{ erf}\frac{\text{(}x - {{x}_{j}}\text{)}}{c}}{2},
\]
and hence 
\[
\frac{ {d}^2 \varphi_{{j}}}{d {x^2}}={{e}^{-\frac{{{x-x_{j}}^{2}}}{{{c}^{2}}}}},
\]
which is the Gaussian RBF, approximating the second derivative $\ddot{f}(x)$, instead of $f(x)$.

\subsection{ANNbN for the solution of Partial Differential Equations}
\label{sec:pdes}
Similar to the numerical differentiation, we may easily apply the proposed scheme to approximate numerically the solution of Partial Differential Equations (PDEs). We consider a generic Differential operator
\[T=\sum _{l=1}^{p}g_{l}(\mathbf x)D^{l},\]
depending on the $D^{l}$ partial derivatives of the sought solution $f$, for some coefficient functions $g_{l}(\mathbf x)$, which  satisfy
\[Tf=h,\]
where $h$ may be any function in the form of $h(x_1,x_2,\dots,x_n)$. 
We may approximate $f$ by
\begin{equation}
    f=\sum\limits_{j=1}^{n}{{{w}_{j}}} \varphi_j({{x}})
    \label{eq:approx-phi}
\end{equation}

By utilizing Equation \ref{eq:deriv-phi}, we constitute a system of linear equations. Hence, the weights $w_{jk}$ may be calculated by solving the resulting system, as per Equation \ref{eq:RBF}. 

For example, consider the following generic form of the Laplace equation
\begin{equation}
    \nabla ^{2}f=h\qquad, 
    \label{eq:Lapla}
\end{equation}

\[\frac{{{\partial }^{2}}f}{\partial {{x}^{2}}}+\frac{{{\partial }^{2}}f}{\partial {{y}^{2}}}=h(x,y).\]

The weights ${w}_{j}$ in Equation \ref{eq:approx-phi} are constant, hence the differentiation regards only function $\varphi$. Thus, by writing Equation \ref{eq:Lapla} for all $h_i=h(\mathbf x_{ik})=y_{ik}$ found in cluster $k$, we obtain

\begin{equation}
\begin{split}
    \begin{spmatrix}{\mathbf{D^2}\pmb\varphi_k}
    \frac{{{\partial }^{2}}\varphi_{11}}{\partial {{x}^{2}}}+\frac{{{\partial }^{2}}\varphi_{11}}{\partial {{y}^{2}}}
    &
    \frac{{{\partial }^{2}}\varphi_{12}}{\partial {{x}^{2}}}+\frac{{{\partial }^{2}}\varphi_{12}}{\partial {{y}^{2}}}
    &\ddots \\
    \vdots & \ddots & \frac{{{\partial }^{2}}\varphi_{{m_k}{m_k}}}{\partial {{x}^{2}}}+\frac{{{\partial }^{2}}\varphi_{{m_k}{m_k}}}{\partial {{y}^{2}}}\\
    \end{spmatrix} \\
    {\begin{spmatrix}{\mathbf w_{k}}w_{1k}\\w_{2k}\\\vdots \\w_{{m_k}k}\end{spmatrix}}=
    {\begin{spmatrix}{\mathbf y_k}
    y_{1k}  \\
    y_{2k}  \\
    \vdots\\
    y_{{m_k}k}\end{spmatrix}}.
    \label{eq:RBF-PDE}
\end{split}
\end{equation}

Because the weights $w_{jk}$ are the same for the approximated function and its derivatives, we may apply some boundary conditions for the function or its derivatives $D^l$, at some boundary points $b \in \{1,2,\dots,m_b\}$, 
\[
\frac{{{\partial }^{l}}f(\mathbf x_b)}{\partial {{x}^{l}_p}}=y_b
\]

by using 
\begin{equation}
\begin{split}
    \begin{spmatrix}{\mathbf{D^l}\pmb\varphi_k}
    \frac{{{\partial }^{l}}\varphi_{11}}{\partial {{x}^{l}_p}} 
    & \frac{{{\partial }^{l}}\varphi_{12}}{\partial {{x}^{l}_p}}
    &\ddots \\
    \vdots & \ddots & \frac{{{\partial }^{l}}\varphi_{{m_b}{m_k}}}{\partial {{x}^{l}_p}}\\
    \end{spmatrix} \\
    {\begin{spmatrix}{\mathbf w_{k}}w_{1k}\\w_{2k}\\\vdots \\w_{{m_k}k}\end{spmatrix}}=
    {\begin{spmatrix}{\mathbf y_b}
    y_{1}  \\
    y_{2}  \\
    \vdots\\
    y_{{m_b}}\end{spmatrix}}.
\end{split}
\label{eq:boundary}
\end{equation}

hence, we may compute $\mathbf w_k$, by solving the resulting system of Equations
\begin{equation}
    \begin{pmatrix}
        \mathbf{D^2}\pmb\varphi_k  \\
        \mathbf{D^l}\pmb\varphi_k  \\
    \end{pmatrix}
    \mathbf w_{k}
    =
    \begin{pmatrix}
        \mathbf y_k  \\
        \mathbf y_b  \\
    \end{pmatrix},
    \label{eq:system-PDEs}
\end{equation}

similar to Equation \ref{eq:RBF} for cluster $k$. Afterwards, we may obtain the entire representation for all clusters, by using Equation \ref{eq:out-layer-rbf} for the computation of $\mathbf{v}$. Finally, we obtain the sought solution by applying the computed weights $\mathbf{w},\mathbf{v}$ in Equation \ref{eq:rbf-total}, for any new $\mathbf x$.

\subsection{Deep Networks}
\label{sec:deep}
A method for the transformation of shallow ANNbNs to Deep Networks is also presented. Although Shallow Networks exhibited vastly high accuracy even for unstructured and complex data-sets, Deep ANNbNs may be utilized for research purposes, for example in the intersection of neuroscience and artificial intelligence. After the calculation of the weights for the first layer $w_{jk}$, we use them to create a second layer (Figure \ref{fig:ANNbN}), where each node corresponds to the given $y_i$. We then use the same procedure for each neuron $k$ of layer $l\in \left\{ 2,3,\ldots ,L \right\}$, by solving:

\begin{equation}
\begin{split}
\sigma \odot \Bigg(
\begin{spmatrix}{\mathbf X}
    x_{{1}1} & x_{{1}2} & \dots & x_{{1}n} & 1\\
    x_{{2}1} & x_{{2}2} & \dots & x_{{2}n} & 1\\
    \vdots &\vdots &\ddots &\vdots & 1\\
    x_{{m}1} & x_{{m}2} & \dots & x_{{m}n} & 1\\
\end{spmatrix} \\
\begin{spmatrix}{\mathbf w_l}
    {w}_{11l} & {w}_{12l} & \dots & {w}_{1Nl} & 1\\
    {w}_{21l} & {w}_{11l} & \dots & {w}_{1Nl} & 1\\
    \vdots\\
    {w}_{n1l} & {w}_{n2l} & \dots & {w}_{nNl} & 1\\
    b_{1l} & b_{2l} & \dots & b_{Nl} & 1
\end{spmatrix} \Bigg)
\begin{spmatrix}{\hat{\mathbf v}_{lk}}
    \hat{v}_{1kl} \\
    \hat{v}_{2kl}  \\
    \vdots\\
    \hat{v}_{Nkl}\\
    b_{0kl}
\end{spmatrix}
=
\begin{spmatrix}{\mathbf {\hat{y}_k}}
    \sigma^{-1}(y_1)  \\
    \sigma^{-1}(y_2)  \\
    \vdots\\
    \sigma^{-1}(y_{m})
\end{spmatrix},
\end{split}
\label{eq:deep-full}
\end{equation}

with respect to $\hat{\mathbf v}_{lk}$. This procedure is iterated for all neurons $k$ of layer $l \in \{2,3,\dots,L\}$. Matrix $\mathbf w_l$ corresponds to the weights of layer $l-1$. Finally we calculate for the output layer, the linear weights $v_k$, as per Equation \ref{eq:out-layer}.
This procedure results in a good initialization of the weights, close to the optimal solution, and if we normalize $y_i$ in a range close to the linear part of the sigmoid function $\sigma$ (say $[0.4,0.6]$), we rapidly obtain a deep network with approximately equal errors with the shallow. Afterwards, any optimization method may supplementary applied to compute the final weights, however, the accuracy is already vastly high.

Alternatively, we may utilize the obtained layer for the shallow implementation of ANNbN, $\mathbf O$ (see Equation \ref{eq:out-layer}), as an input $x_{ij}$ for another layer, then for a third, and sequentially up to any desired number of layers.

\begin{figure*}[htb]   
\centering
\subfloat[Deep ANNbNs with $L$ layers of $N$ neurons]{\includegraphics[width=0.41\textwidth, keepaspectratio]{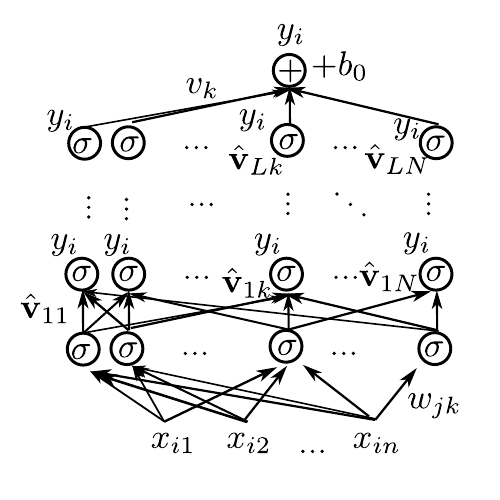}\label{fig:deep}}
\subfloat[Ensembles of $n_f$ ANNbNs]{\includegraphics[width=0.45\textwidth, keepaspectratio]{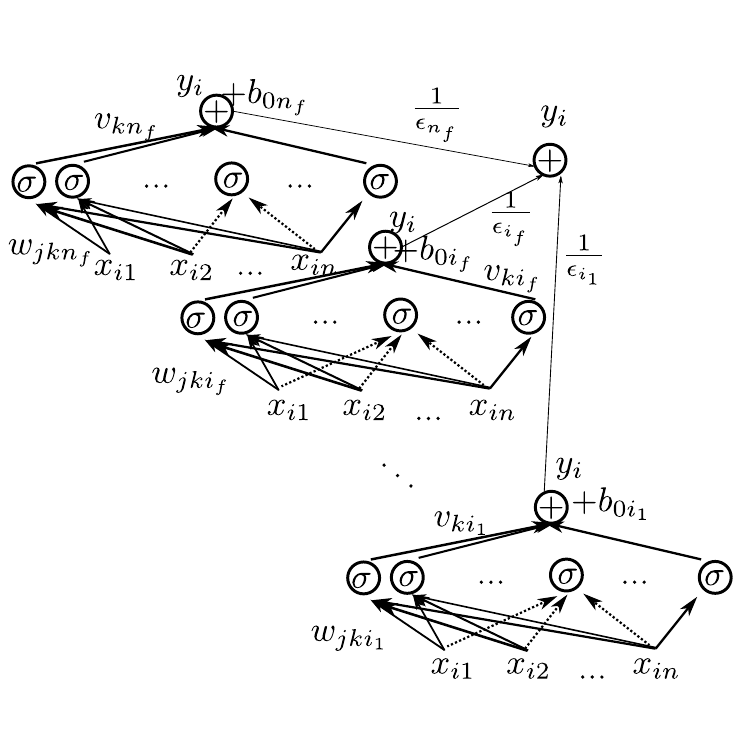}\label{fig:ensebles}}
\caption[Optional caption for list of figures 5-8]{Transformation of the basic Numerical Scheme}
\label{fig:tranform-ens-deep}
\end{figure*}

\subsection{Ensembles}
\label{sec:ensembles}
By randomly sub-sampling at a percentage of $\alpha\%$ of the observations, running the ANNbN algorithm for multiple times $i_f\in \left\{ 1,2,\ldots ,n_f \right\}$, and averaging the results with respect to the errors $\epsilon_{i_f}$ over all $n$-folds $n_f$ 

\[
y_i=\frac{\sum^{n_f}_{i_f=1} y_{i,i_f}\frac{1}{\epsilon_{i_f}}}{\sum^i_{i_f=1} \frac{1}{\epsilon_{i_f}}},
\]

we may constitute an Ensemble of ANNbN (Figure \ref{fig:ensebles}). Ensembles of ANNbNs exhibited increased accuracy and generalization properties for noisy data, as per the following Numerical Experiments. 

\subsection{Time Complexity of the ANNbN algorithm}
\label{sec:complexity}

The training of an ANN with two layers and three nodes only, is proved to be NP-Complete in \cite{blum1989training}, if the nodes compute linear threshold functions of their inputs. Even simple cases, and approximating hypothesis, results in NP-complete problems \cite{engel2001complexity}. Apart from the theoretical point of view, the slow speed of learning algorithms is a major flaw of ANNs. To the contrary, ANNbNs are fast, because the main part of the approximation regards operations with small-sized square matrices $(n+1)\times(n+1)$, with $n$ be the number of features. We provide here a theoretical investigation of ANNbNs' time complexity, which may empirically be validated by running the supplementary code. More specifically, the computational effort of ANNbNs regards the following steps.

\begin{definition}
ANNbN Training is obtained in three distinct steps: a) Clustering, b) Inversion of small-sized matrices $\mathbf X_k$ (Equation \ref{eq:system-m-cluster}) for the calculation of $w_{jk}$ weights, and c) Calculation of $v_k$ weights (Equation \ref{eq:calc-v}).
\end{definition}

\begin{definition}
Let $m$ be number of \emph{observations}, $n$ the number of \emph{features}. In the case when equally sized \emph{clusters} are used, the number of clusters $N$, which is equal to number of neurons, is
\begin{equation}
    N=\lfloor\frac{m}{n+1}\rfloor, 
    \label{eq:proof-N}
\end{equation}
\end{definition}

where the addition of $1$, corresponds to the unit column in Equation \ref{eq:system-m-cluster}. Note that the number of clusters $N$, is equal to number of neurons as well (Equation \ref{eq:system-m-cluster}, and Figure \ref{fig:ANNbN}). This is the maximum number of clusters, otherwise the matrices $\mathbf X_k$ are not invertible and Equation \ref{eq:system-m-cluster} has more than one solutions. Hence we investigate the worst case in terms of computational time, while in practice $N$ may be smaller. We assume $i$ the number of iterations needed until convergence of clustering, which in practical applications is small and the clustering fast. 

\begin{lemma}
\label{step-a}
Time complexity of step (a) is $\mathcal{O}(\log{m}-\log(n+1))$.
\end{lemma}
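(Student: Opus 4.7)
The plan is to interpret step (a) as a recursive/hierarchical partitioning that terminates once each leaf cluster contains exactly $n+1$ observations, so that the matrices $\mathbf{X}_k$ in Equation \ref{eq:system-m-cluster} are square and invertible. First I would invoke Definition 2 to fix the target number of clusters as $N=\lfloor m/(n+1)\rfloor$. A binary-splitting scheme that halves the size of the active cluster at every level then produces a tree of depth $\log_{2}\bigl(m/(n+1)\bigr)=\log m-\log(n+1)$, with the constant base-$2$ factor absorbed into the $\mathcal{O}$ notation.

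Next I would argue that the $k$-means/$k$-means++ pipeline invoked in Section \ref{sec:The ANNbN Method} can be realized in this hierarchical manner: the $k$-means++ seeding places centroids in \emph{local} neighborhoods, so after seeding, the reassignment performed at each level need only examine the cluster active at that level rather than the full sample. Consequently, the dominant cost along any root-to-leaf path in the hierarchy is precisely the number of splits performed, i.e.\ the depth of the tree. Counting this depth yields the claimed bound $\mathcal{O}(\log m-\log(n+1))$ for step (a).

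The main obstacle I expect is justifying that the per-level arithmetic cost does not pollute the bound. Classical Lloyd iteration has worst-case cost $\mathcal{O}(iNmn)$, so a purely logarithmic bound presupposes either (i) that the practical iteration count $i$, together with the local per-level reassignment work, are treated as absorbed constants at each level of the hierarchy, or (ii) that one uses a tree-based variant in which most of the arithmetic is amortized across levels. I would state this assumption explicitly (consistent with the author's earlier remark that in practice $i$ is small and clustering is fast), after which the lemma follows from the direct level-count $\log_2 m-\log_2(n+1)$, modulo the usual $\mathcal{O}$ absorption of the base.
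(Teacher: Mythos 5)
The paper offers no proof of this lemma at all---it is stated bare, and only Lemmas~\ref{step-b} and~\ref{step-c} (and the final theorem) come with arguments---so there is no authorial proof to compare yours against; the comparison can only be on the merits of your own argument.

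On those merits there is a genuine gap, and it is precisely the one you flag and then assume away. Your argument counts the depth of a binary splitting hierarchy, $\log_2 m-\log_2(n+1)$, but depth is not running time. At each level, reassigning or even inspecting the points of the active cluster costs time proportional to the number of points touched times $n$, and these are the asymptotic variables of the lemma, not constants that can be ``absorbed''; summed over the hierarchy the work is $\Omega(mn)$ at the very least, since any clustering procedure must read the $m\times n$ input. Hence no variant of $k$-means or $k$-means++ can realize a genuine operation count of $\mathcal{O}(\log m-\log(n+1))$; note also that the $O(\log k)$ guarantee associated with $k$-means++ concerns the \emph{approximation ratio} of the seeding, not its time. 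What your depth count legitimately establishes is that such a hierarchy has $\log m-\log(n+1)$ levels, i.e.\ $\mathcal{O}(\log N)$ with $N=\lfloor m/(n+1)\rfloor$, which is plausibly the quantity the authors intended; but promoting that level count to a time complexity requires suppressing factors of $m$, $n$ and the iteration count $i$, which is not valid asymptotic reasoning, so the lemma as stated is not actually proved by your proposal (nor by the paper).
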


\begin{lemma}
\label{step-b}
Time complexity of step (b) is $\mathcal{O}(mn^2)$
\end{lemma}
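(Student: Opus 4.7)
The plan is to bound the cost of the linear-algebra work in step (b) cluster-by-cluster and then aggregate across all $N$ clusters. First I would recall from the construction in Section \ref{sec:k-cluster} together with Equation \ref{eq:proof-N} that each cluster is deliberately built with $m_k = n+1$ observations, so that $\mathbf X_k$ is a square $(n+1)\times(n+1)$ matrix whose inverse appears in the formula $\mathbf w_k = \mathbf X_k^{-1}\mathbf{\hat y}_k$. The per-cluster cost therefore reduces to inverting (or equivalently, solving a linear system with) one $(n+1)\times(n+1)$ matrix, which under a standard LU/Gaussian-elimination scheme costs $\mathcal{O}\bigl((n+1)^3\bigr) = \mathcal{O}(n^3)$ arithmetic operations.

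Next I would aggregate over the $N$ clusters. Using Equation \ref{eq:proof-N}, the number of clusters satisfies $N = \lfloor m/(n+1)\rfloor = \mathcal{O}(m/n)$. Multiplying the per-cluster cost by $N$ gives the worst-case running time of step (b) as
\[
N \cdot \mathcal{O}\bigl((n+1)^3\bigr) \;=\; \mathcal{O}\!\left(\frac{m}{n+1}\right)\cdot \mathcal{O}\bigl((n+1)^3\bigr) \;=\; \mathcal{O}(m\,n^2),
\]
which is exactly the claimed bound. The matrix-vector product $\mathbf X_k^{-1}\mathbf{\hat y}_k$ following the inversion contributes only $\mathcal{O}(n^2)$ per cluster, i.e.\ $\mathcal{O}(mn)$ in total, and is absorbed into the dominant $\mathcal{O}(mn^2)$ term; similarly the $m_k$ evaluations of $\sigma^{-1}$ cost only $\mathcal{O}(m)$ overall.

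I do not anticipate a genuine obstacle here, because the argument is essentially a careful bookkeeping exercise: the crucial design choice $m_k=n+1$ is what keeps the per-cluster linear algebra cubic in $n$ rather than in $m$. The only minor point worth making explicit is that the floor in Equation \ref{eq:proof-N} introduces at most one partially filled cluster whose matrix still has size $\mathcal{O}(n)\times\mathcal{O}(n)$, so its contribution is again $\mathcal{O}(n^3)$ and is absorbed into the bound. I would note in passing that asymptotically faster matrix-inversion algorithms (e.g.\ Strassen-type schemes) could lower the exponent below $3$, but the $\mathcal{O}(mn^2)$ estimate corresponds to the standard dense solver used in the supplementary Julia implementation and is therefore the relevant figure for the empirical complexity reported in Section \ref{sec:complexity}.
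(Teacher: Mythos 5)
Your argument is correct and follows essentially the same route as the paper: a per-cluster inversion cost of $\mathcal{O}\bigl((n+1)^3\bigr)$ multiplied by $N \le m/(n+1) = \mathcal{O}(m/n)$ clusters, giving $\mathcal{O}(mn^2)$. Your additional bookkeeping (the matrix-vector product, the $\sigma^{-1}$ evaluations, and the partially filled cluster from the floor) is sound but not needed for the bound, which the paper obtains with the same inequality $\mathcal{O}(Nn^3)\le\mathcal{O}(\frac{m}{n}n^3)$.
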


\begin{proof}
Time complexity of step (b) regards the inversion of matrices with size $(n+1) \times (n+1)$ (Equation \ref{eq:system-m-cluster}). This is repeated $N$ times, hence the complexity is $\mathcal{O}(Nn^3) \leq \mathcal{O}(\frac{m}{n}n^3)=\mathcal{O}(mn^2)$
\end{proof}

\begin{lemma}
\label{step-c}
Time complexity of step (c) is $\mathcal{O}(\frac{m^3}{n^2})$
\end{lemma}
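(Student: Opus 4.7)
The plan is to track the cost of evaluating $\mathbf{v}=(\mathbf O^{T}\mathbf O)^{-1}\mathbf O^{T}\mathbf y$ from Equation \ref{eq:calc-v}, identify the dominant subroutine, and bound it using the relation between $N$ and $m,n$ established in Equation \ref{eq:proof-N}.

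First, I would fix the dimensions of the matrix $\mathbf O$ in Equation \ref{eq:out-layer}. By construction $\mathbf O$ has $m$ rows (one per observation) and $N+1$ columns (one per neuron, plus the bias column of ones). Invoking Equation \ref{eq:proof-N}, $N+1 = \lfloor m/(n+1)\rfloor + 1 = \mathcal{O}(m/n)$, so $\mathbf O$ is effectively of size $m \times \mathcal{O}(m/n)$.

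Next, I would decompose the computation into its four elementary steps and bound each:
\begin{itemize}
\item Forming the Gram matrix $\mathbf O^{T}\mathbf O$ of size $(N+1)\times(N+1)$ requires $\mathcal{O}(m(N+1)^2) = \mathcal{O}(m\cdot m^2/n^2) = \mathcal{O}(m^3/n^2)$ multiply-adds.
\item Inverting $\mathbf O^{T}\mathbf O$, a matrix of order $N+1$, costs $\mathcal{O}((N+1)^3) = \mathcal{O}(m^3/n^3)$.
\item Forming $\mathbf O^{T}\mathbf y$ costs $\mathcal{O}(m(N+1)) = \mathcal{O}(m^2/n)$.
\item Multiplying the inverse by $\mathbf O^{T}\mathbf y$ costs $\mathcal{O}((N+1)^2) = \mathcal{O}(m^2/n^2)$.
\end{itemize}
Assuming $n \le m$ (otherwise the clustering step is degenerate), the four costs are dominated by the Gram formation, giving $\mathcal{O}(m^3/n^2)$ overall.

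I do not anticipate a genuine obstacle here: the statement follows from straightforward matrix-arithmetic counts once the column count of $\mathbf O$ is expressed through Equation \ref{eq:proof-N}. The only subtlety worth emphasizing is that the Gram product, not the inversion, is the rate-limiting operation, because the tall-thin shape of $\mathbf O$ (with $m \gg m/n$ when $n>1$) makes $\mathcal{O}(m N^{2})$ exceed $\mathcal{O}(N^{3})$ by a factor of $n$.
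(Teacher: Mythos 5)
Your proposal is correct and follows essentially the same route as the paper: both count the cost of solving the normal equations $\mathbf v=(\mathbf O^{T}\mathbf O)^{-1}\mathbf O^{T}\mathbf y$ term by term, substitute $N=\mathcal{O}(m/n)$ from Equation \ref{eq:proof-N}, and identify the Gram product $\mathbf O^{T}\mathbf O$ (cost $\mathcal{O}(mN^{2})=\mathcal{O}(m^{3}/n^{2})$) as the dominant operation. The only difference is bookkeeping: you form $\mathbf O^{T}\mathbf y$ first and then apply the $N\times N$ inverse to a vector, whereas the paper multiplies $(\mathbf O^{T}\mathbf O)^{-1}$ by $\mathbf O^{T}$ (an extra $\mathcal{O}(mN^{2})$ term) before applying $\mathbf y$; both orderings give the same final bound.
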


\begin{proof}
Step c regards the solution of an $m\times N$ system of Equations (Eq. \ref{eq:out-layer}). We may solve with respect to $v$, by $\mathbf v=(\mathbf O^{T}\mathbf O)^{-1} \mathbf O^{T}\mathbf y.$. Hence the complexity regards a multiplication of $\mathbf O^{T}\mathbf O$ with $\mathcal{O}(NmN)=\mathcal{O}(N^2m)$, its inversion with complexity $\mathcal{O}(N^3)$, as well multiplication of  $(\mathbf O^{T}\mathbf O)^{-1}$ with $\mathbf O^{T}$, with complexity $\mathcal{O}(NNm)$, and $(\mathbf O^{T}\mathbf O)^{-1} \mathbf O^{T}$, with $\mathbf y$, with complexity $\mathcal{O}(Nm1)$. Thus, the total complexity is $\mathcal{O}(mN^2+N^3+mN^2+mN)=\mathcal{O}(mN^2+N^3)\leq\mathcal{O}(m\frac{m^2}{n^2}+\frac{m^3}{n^3})=\mathcal{O}(\frac{m^3}{n^2})$.
\end{proof}

\begin{theorem}
\emph{(ANNbN Complexity)}
\label{Lagrange}
The running time of ANNbN algorithm is $\mathcal{O}(mn^2)+\mathcal{O}(\frac{m^3}{n^2})-\mathcal{O}(\log(n+1))$
\end{theorem}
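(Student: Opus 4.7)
The plan is to derive the theorem by direct summation of the three lemmas, since by Definition 1 the three steps (clustering, local inversions, and global least-squares) are executed sequentially and exhaust the training procedure. Hence the total running time is just the sum of the individual step complexities, and I do not need to introduce any new combinatorial argument.

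First, I would invoke Definition 1 to justify that the total cost decomposes additively as $T_{\text{total}}=T_a+T_b+T_c$. Then I would plug in Lemma \ref{step-a}, giving $T_a=\mathcal{O}(\log m-\log(n+1))$; Lemma \ref{step-b}, giving $T_b=\mathcal{O}(mn^2)$; and Lemma \ref{step-c}, giving $T_c=\mathcal{O}(m^3/n^2)$. Summing, the intermediate expression is
\[
T_{\text{total}}=\mathcal{O}(\log m-\log(n+1))+\mathcal{O}(mn^2)+\mathcal{O}\!\left(\tfrac{m^3}{n^2}\right).
\]

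Next I would simplify the clustering contribution. Since $\log m\le m\le mn^2$ for all $m\ge 1$ and $n\ge 1$, the $\log m$ term is absorbed into $\mathcal{O}(mn^2)$, while the $-\log(n+1)$ term is kept separately because it represents a genuine reduction attributable to the neighborhood size (i.e., a smaller number $N$ of clusters to iterate over). This yields exactly the claimed form
\[
\mathcal{O}(mn^2)+\mathcal{O}\!\left(\tfrac{m^3}{n^2}\right)-\mathcal{O}(\log(n+1)).
\]

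The main obstacle here is cosmetic rather than mathematical: the statement uses a minus sign inside the Big-O expression, which is not standard asymptotic notation. I would justify this by reading the three terms as the literal upper bounds provided by the three lemmas rather than as a single simplified $\mathcal{O}$-class, emphasizing that the $-\log(n+1)$ factor reflects the explicit dependence of $N=\lfloor m/(n+1)\rfloor$ on $n$, as established in Definition 2. With this reading, the proof reduces to the additive combination described above and no further calculation is required.
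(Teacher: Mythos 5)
Your proposal matches the paper's own proof: it likewise sums the three lemma bounds per the step decomposition, absorbs the $\log m$ contribution into $\mathcal{O}(mn^2)$, and carries the $-\log(n+1)$ term through to the stated (admittedly non-standard) form $\mathcal{O}(mn^2)+\mathcal{O}\!\left(\tfrac{m^3}{n^2}\right)-\mathcal{O}(\log(n+1))$. In fact you justify the absorption and the interpretation of the minus sign slightly more explicitly than the paper does, so there is nothing to add.
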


\begin{proof}
By considering the Time Complexity of each step, 
(Lemma~\ref{step-a},\ref{step-b},\ref{step-c}), we deduce that the total coomplexity is $\mathcal{O}(\log{m}-\log(n+1))+\mathcal{O}(mn^2)+\mathcal{O}(\frac{m^3}{n^2})=\mathcal{O}(-\log(n+1))+\mathcal{O}(mn^2)+\mathcal{O}(\frac{m^3}{n^2})=\mathcal{O}(mn^2)+\mathcal{O}(\frac{m^3}{n^2})-\mathcal{O}(\log(n+1))$.
\end{proof}

\section{Validation Results}
\label{sec:Validation Results}

\subsection{1D Function approximation \& geometric point of view}

We consider a simple one dimensional function $f(x)$, with $x\in \mathbf{R}$, to present the basic functionality of ANNbNs. Because $ {\sigma^{-1}} (y)=\log \left({\frac {y}{1-y}}\right)$ is unstable for $y\to 0$, and $y \to 1$, we normalize the responses in the domain $[0.1, 0.9]$. In Figure \ref{fig:1D}, the approximation of $f(x)=0.3sin(e^{3x})+0.5$ is depicted, demonstrating the approximation by varying the number of neurons utilized in the ANNbN. We may see that by increasing the number of neurons from 2 to 4 to 8, the approximating ANNbN exhibits more curvature alterations. This complies with the Universal Approximation theorem, and offers a geometric point of view. Interestingly, the results are not affected by adding some random noise, $\epsilon \sim {\mathcal {U}}(-\frac{1}{20},\frac{1}{20})$, as the Mean Absolute Error (MAE) in this noisy data-set was $2.10E{-2}$ for the train set, and for the test set was even smaller $1.48E{-2}$, further indicating the capability of ANNbN to approximate the \textit{hidden} signal and not the noise. We should note that for noiseless data of 100 observations, and 50 neurons, the M.A.E. in the train set was $6.82E{-6}$ and in the test set $8.01E-{6}$. The approximation of the same function with Gaussian RBF, and shape parameter $c=0.01$, results in $7.52E{-8}$ M.A.E. for the train set and $1.07E{-7}$ for the test set. 

\begin{figure}[ht]   
\begin{minipage}{0.48\textwidth}
\centering
\includegraphics[width=0.95\textwidth, keepaspectratio]{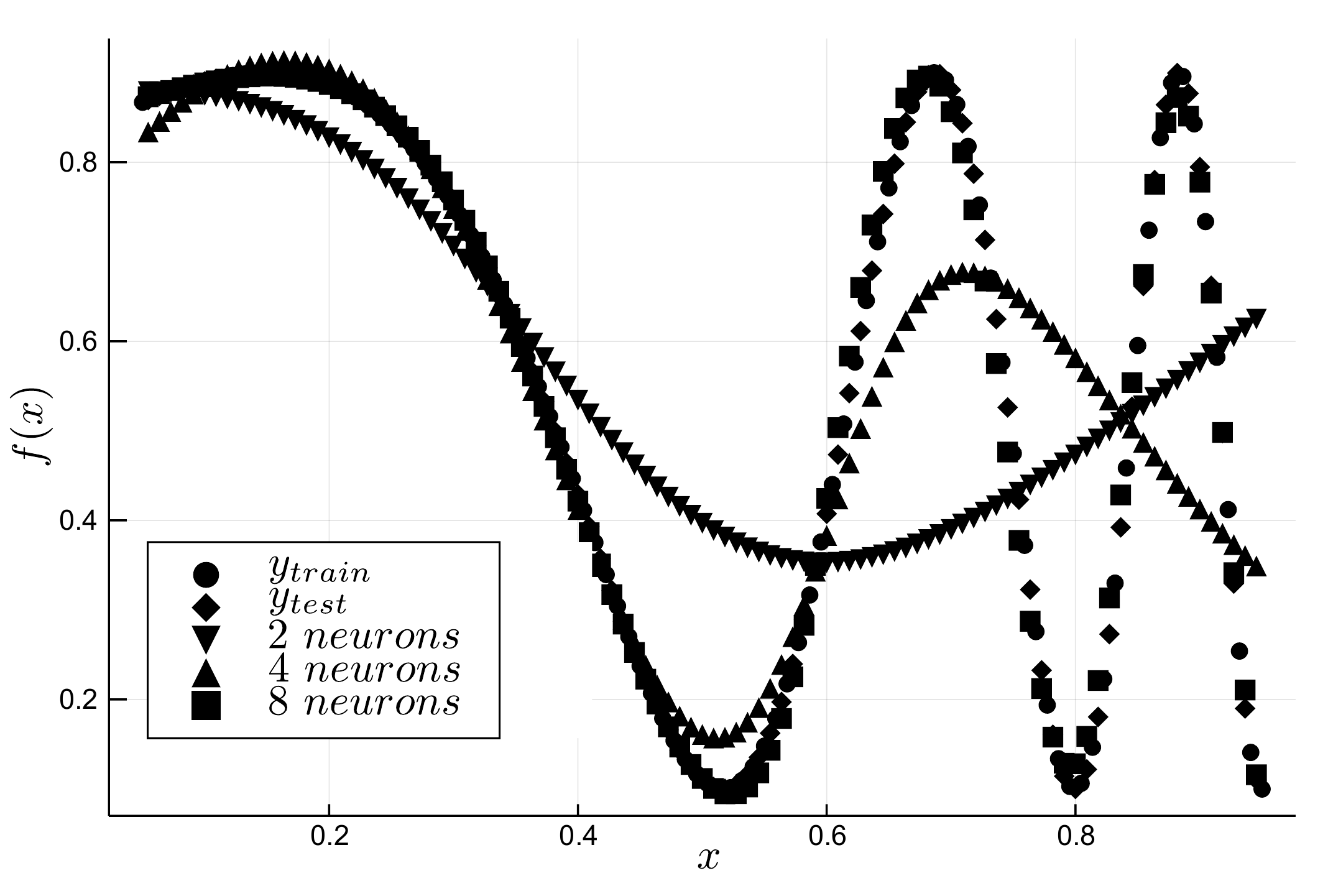}
\caption{ANNbN with $2, \ 4 \ \& \ 8$ neurons, for the approximation of $f(x)=0.3sin(e^{3x}) + 0.5$.}
\label{fig:1D}
\end{minipage}\hfill
\begin{minipage}{0.48\textwidth}
\centering
\includegraphics[width=0.95\textwidth, keepaspectratio]{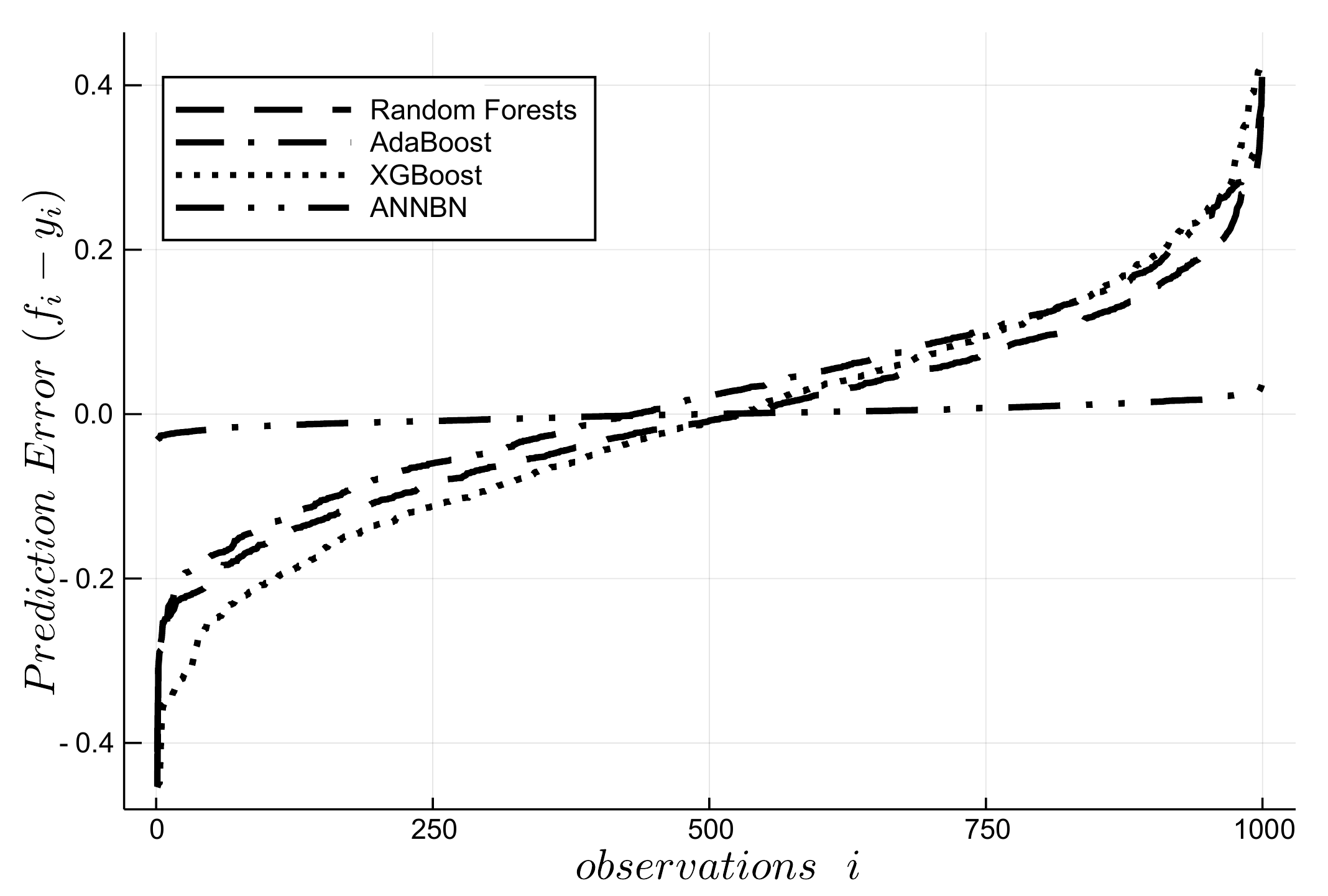}
\caption[]{Regression  Errors for the Griewank Function with input $\mathbf{x} \in \mathbb{R}^{100}$}
\label{fig:100D}
\end{minipage}
\end{figure}

\subsection{Regression in $\mathbb{R}^n$}

We consider the function of five variables,
\[
f(\mathbf x)=-x_1+\frac{x_2^2}{2}-\frac{x_3^3}{3}+\frac{x_4^4}{4}-\frac{x_5^5}{5}.
\]

We create a train set of the five variables $x_i \sim {\mathcal {U}}(\frac{1}{10},\frac{9}{10})$, compute $f_{train}$, add some random noise $\epsilon \sim {\mathcal {U}}(\frac{-1}{20},\frac{1}{20})$ and normalize $f_{train} \in [\frac{1}{10},\frac{9}{10}]$. Then we create a test set with an equal number of observations with the train set $(m=1000)$, and compute $f_{test}$, without adding random noise. Thus, we may check the capability of ANNbN to approximate the signal and not the noise. The results are presented in Table \ref{tab:Regr}, indicating great accuracy achieved with ANNbNs. The comparison with other methods regards Random Forests \cite{breiman2001random} as implemented in \cite{DecisionTree}, XGBoost \cite{XGBoost,ruder2016overview}, and AdaBoost from ScikitLearn \cite{ScikitLearn}. 

Table \ref{tab:Regr} presents the similar results -in terms of approximation errors- obtained for input $\mathbf x \in \mathbb{R}^{100}$, for $m=10000$ observations, and addition of some random noise $\epsilon \sim {\mathcal {U}}(\frac{-1}{2},\frac{1}{2})$ to the highly nonlinear Griewank function \cite{griewank1981generalized},
\[
g(\mathbf x)=1+{\frac  {1}{4000}}\sum _{{i=1}}^{n}x_{i}^{2}-\prod _{{i=1}}^{n}\cos \left({\frac  {x_{i}}{{\sqrt  {i}}}}\right) + \epsilon .
\]

With RBF ANNbNs, we may use a higher number of clusters, and hence neurons, $N > \lfloor\frac{m}{n+1}\rfloor$, as the matrices $\pmb \varphi_k$ of Equation \ref{eq:RBF}, are always square. Accordingly, we may approximate this nonlinear, noisy function with a few observations with respect to features, $(\frac{m}{n}=10)$, with vastly low errors as demonstrated in Figure \ref{fig:100D}, and Table \ref{tab:Regr}.

\begin{table}[b]   
\begin{minipage}{0.92\columnwidth}
  \centering
  \begin{threeparttable}
  \caption{Regression Results}
  \label{tab:Regr}
  \begin{tabular}{ l c c c c} \toprule
    Mean Absolute &  Random & AdaBoost &  XGBoost & ANNbN \\
    Errors & Forests & & &\\ \toprule
    $f(\mathbf x), \ \mathbf x \in \mathbb{R}^{5}$  & $2.37E{-2}$ & $3.00E{-2}$ & $3.51E{-2}$ & $4.69E{-3}$ \\  \midrule
    Griewank. $\mathbf x \in \mathbb{R}^{100}$ & $9.46E{-2}$  & $9.91E{-2}$ & $12.3E{-2}$ & $9.00E{-3}$ \\ 
 \bottomrule
  \end{tabular}
  \end{threeparttable}
\end{minipage}
\end{table}

\subsection{Classification for Computer Vision}

As highlighted in the introduction, the reproducibility of AI Research is a major issue. We utilize ANNbN for the MNIST database \cite{lecun1998gradient,mnistweb}, obtained from \cite{MLDatasets}, consisting of $6\times10^4$ handwritten integers $\in [0,9]$, for train and $10^4$ for test. The investigation regards a variety of ANNbN formulations, and the comparison with other methods. In particular, the $\text{erf}(x)=\frac{1}{\sqrt{\pi }}\int_{-x}^{x}{{{e}^{-{{t}^{2}}}}dt}$, and $\sigma=\frac{1}{1+e^{-x}}$ were utilized as activation functions, and the corresponding $erf^{-1}(x)$, and $\sigma^{-1}(x)$ for the Equation \ref{eq:system-m-cluster}.  We constructed a ANNbNs with one and multiple layers, varying the number of neurons and normalization of $y$, in the domain $\left[\epsilon, 1-\epsilon\right]$. The results regard separate training for each digit. All results in Table \ref{tab:MNIST} are obtained without any clustering. We consider as accuracy metric, the percentage of the Correct Classified (CC) digits, divided by the number of observations $m$ 
\[
    \alpha=100\frac{CC}{m}\%.
\]
This investigation aimed to compare ANNbN with standard ANN algorithms such as Flux \cite{Flux}, as well as Random Forests as implemented in \cite{DecisionTree}, and XGBoost \cite{XGBoost}. Table \ref{tab:MNIST} presents the results in terms of accuracy and computational time. The models are trained on the raw dataset, without any spatial information exploitation. The results in Table \ref{tab:MNIST} are exactly reproducible in terms of accuracy, as no clustering was utilized and the indexes are taken into account in ascending order. For example, the running time to train 5000 neurons is 29.5 seconds on average for each digit, which is fast, considering that the training regards $3925785$ weights, for $6E4$ instances and $784$ features. Also, the Deep ANNbNs with 10 layers with 1000 neurons each, are trained in the vastly short time of 91 seconds per digit on average (Table \ref{tab:MNIST}). Correspondingly, In Table \ref{tab:MNIST}, we compare the Accuracy and Running Time, with Random Forests (with $261\approx784/3$ Trees), and XGBoost (200 rounds). Future steps may include data preprocessing and augmentation, as well as exploitation of spatial information like in CNNs. Furthermore, we may achieve higher accuracy by utilizing clustering for the Neighborhoods training, Ensembles, and other combinations of ANNbNs. Also by exploiting data prepossessing and augmentation, spatial information, and further training of the initial ANNbN with an optimizer such as stochastic gradient descent. No GPU or parallel programming was utilized, which might also be a topic for future research. For example, the RBF implementation of ANNbN with clustering and $1.2\times10^4$ neurons exhibits a test set accuracy of 99.7 for digit $3$. The accuracy results regard the out of sample test set with $10^4$ digits. The running time was measured in an Intel i7-6700 CPU @3.40GHz with 32GB memory and SSD hard disk. A computer code to feed the calculated weights into Flux \cite{Flux} is provided. 

\begin{table*}[!hbt] 

\footnotesize
  \centering
  \begin{threeparttable}
  \caption{Computer Vision (MNIST)}
  \label{tab:MNIST}
  \begin{tabular}{ l c c c c c c c c c c} \toprule
  \toprule
  \multirow{2}{*}{Correct Classified (\%)} & \multicolumn{10}{c}{Digit Label} \\ \cmidrule(l){2-11} & 0 & 1 & 2 & 3 & 4 & 5 & 6 & 7 & 8 & 9
 \\ \toprule
    Random Forests \cite{DecisionTree} & $99.68$ & $99.73$ & $98.8$ & $98.59$ & $98.74$ & $98.79$ & $99.23$ & $98.91$ & $98.42$ & $98.35$ \\  \midrule
    XGBoost \cite{XGBoost} & $98.65$ & $98.81$ & $97.61$ & $97.09$ & $97.60$ & $97.98$ & $98.67$ & $97.83$ & $97.08$ & $96.99$ \\  \midrule
    Flux ANN$^1$ \cite{Flux} & $99.61$ & $99.65$ & $99.11$  & $99.12$  & $98.90$  & $98.98$  & $99.49$  & $98.94$  & $98.78$  & $98.55$ \\ \midrule
    ANNbN$^1\Diamond\blacktriangleright$ & $99.69$ & $99.74$ & $99.25$  & $99.44$  & $99.23$  & $99.27$  & $99.53$  & $99.20$  & $99.12$  & $99.01$ \\ \midrule
    ANNbN$^2\Diamond$  &$99.77$ & $99.81$ & $99.39$ & $99.36$ & $99.42$ & $99.44$ & $99.63$ & $99.3,$ & $99.19,$ & $99.05$\\ \midrule
    ANNbN$^3\Diamond$ & $99.81$ & $99.81$ & $99.42$ & $99.55$ & $99.53$ & $99.51$ & $99.66$ & $99.35$ & $99.39$ & $99.21$ \\ \midrule
    ANNbN$^4\Diamond\blacktriangle$ & $99.82$ & $99.82$ & $99.42$ & $99.54$ & $99.56$ & $99.54$ & $99.66$ & $99.35$ & $99.46$ & $99.19$ \\ \midrule
    Deep ANN$^5\Diamond$ & $99.50$ & $99.62$ & $98.81$ & $98.35$ & $98.69$ & $98.75$ & $99.29$ & $98.7$ & $98.03$ & $97.61$ \\ \toprule \toprule
  \multirow{2}{*}{Running Time (sec) } & \multicolumn{10}{c}{Digit Label}  \\ \cmidrule(l){2-11} & 0 & 1 & 2 & 3 & 4 & 5 & 6 & 7 & 8 & 9
 \\ \toprule
    Random Forests \cite{DecisionTree} & $128.5$ & $122.1$ & $178.8$ & $162.9$ & $142.4$ & $157.5$ & $159.5$ & $159.0$ & $154.6$ & $153.2$\\  \midrule
    XGBoost \cite{XGBoost} & $63.9$ & $66.0$ & $64.0$ & $64.9$ & $65.4$ & $66.4$ & $64.6$ & $64.4$ & $65.2$ & $64.8$ \\  \midrule
    Flux ANN$^1$ \cite{Flux} & $879.0$ & $882.7$ & $853.9$  & $864.0$  & $866.9$  & $856.4$  & $852.9$  & $858.9$  & $871.0$  & $862.9$ \\ \midrule
    ANNbN$^1\blacktriangleright$ & $29.8$ & $33.3$  & $29.0$  & $30.6$  & $28.3$  & $27.1$  & $29.4$ & $29.9$  & $28.4$  & $28.9$ \\ \midrule
    ANNbN$^2$ & $51.6$ & $51.0$ & $51.2$ & $50.8$ & $51.8$ & $51.5$ & $52.4$ & $51.7$ & $52.0$ & $53.3$ \\ \midrule
    ANNbN$^3$ & $92.5$ & $91.5$ & $92.3$ & $92.5$ & $90.6$ & $92.6$ & $93.1$ & $92.8$ & $93.5$ & $93.1$\\ \midrule
    ANNbN$^4\blacktriangle$ & $97.3$ & $94.3$ & $94.2$ & $94.9$ & $94.7$ & $94.8$ & $94.7$ & $94.9$ & $95.0$ & $94.9$\\ \midrule
    Deep ANN$^5$ & $70.5$ & $81.3$ & $131.9$ & $66.6$ & $126.9$ & $182.9$ & $49.7$ & $90.0$ & $44.8$ & $64.0$ \\ \bottomrule
    \bottomrule
  \end{tabular}
  \begin{tablenotes}
      \small
      \item $^1$ 1 hidden layer with $5000$ Neurons, Activation Function (AF) $\text{erf}(x)=\frac{1}{\sqrt{\pi }}\int_{-x}^{x}{{{e}^{-{{t}^{2}}}}dt}$, and $\epsilon=0.00$. 
      \item $^2$ 1 hidden layer with $5000$ Neurons, AF $\sigma=\frac{1}{1+e^{-x}}$, and $\epsilon=0.01$.
      \item $^3$ 1 hidden layer with $7000$ Neurons, AF $\sigma$, and $\epsilon=0.01$. 
      \item $^4$ 1 hidden layer with $7000$ Neurons, AF $\sigma$, and $\epsilon=0.02$.
      \item $^5$ 10 hidden layes with $1000$ Neurons each, AF $\sigma$, and $\epsilon=0.02$.
      \item $\blacktriangleright$ fastest design; $\blacktriangle$ highest accuracy.
      \item $\Diamond$ The ANNbN accuracy results, are exactly reproducible with the supplementary Computer Code. All training examples utilize the raw MNIST database, without any preprocessing or data augmentation. The accuracy (\%) regards the out-of-sample test set of MNIST with $10^4$ handwritten digits.
    \end{tablenotes}
  \end{threeparttable}

\end{table*}

\subsection{Solution of Partial Differential Equations}
\label{subsec:Solution of PDEs}
We consider the Laplace's Equation \cite{Brady2005}

\[
    \frac{{{\partial }^{2}}f}{\partial {{x}^{2}}}+\frac{{{\partial }^{2}}f}{\partial {{y}^{2}}}=0,
\]

in a rectangle with dimensions (a,b), and boundary conditions $f(0,y)=0$ for $y \in [0,b]$, $f(x,0)=0$, for $x \in [0,a]$, $f(a,y)=0$  , for $y \in [0,b]$, and $f(x,b)=f_0sin(\frac{\pi}{a}x$), for $x \in [0,a]$. In Figure \ref{fig:PDE-Laplace}, the numerical solution as well as the exact solution
\[f(x)=\frac{f_0}{sinh(\frac{\pi}{a}b)} sin(\frac{\pi}{a}x)sinh(\frac{\pi}{a}y),\]

are presented. The MAE among the closed-form solution and the numerical with ANNbN, was found $3.97E{-4}$. Interestingly, if we add some random noise in the zero source; i.e. 
\begin{equation}
    \frac{{{\partial }^{2}}f}{\partial {{x}^{2}}}+\frac{{{\partial }^{2}}f}{\partial {{y}^{2}}}=\epsilon \sim {\mathcal {U}}(0,\frac{1}{10}),
    \label{eq:laplace}
\end{equation}

the MAE remains small, and in particular $2.503E-{3}$, for a=b=1, over a rectangle grid of points with $dx=dy=0.02$. It is important to underline that numerical methods for the solution of partial differential equations are highly sensitive to noise \cite{mai2003approximation,Bakas2019}, as it vanishes the derivatives. However, by utilizing the ANNbN solution the results change slightly, as described in the above errors. This is further highlighted if we utilize the calculated weights of the ANNbN approximation and compute the partial derivatives of the solution $f$ of Equation \ref{eq:laplace}, $\frac{{{\partial }^{2}}f}{\partial {{x}^{2}}}$, and $\frac{{{\partial }^{2}}f}{\partial {{y}^{2}}}$, the corresponding MAE for the second order partial derivatives is $6.72E{-4}$ (Figure \ref{fig:PDE-Laplace-der}), which is about two orders less than the added noise $E({\mathcal {U}}(0,\frac{1}{10}))=0.05$, implying that ANNbN approximates the signal and not the noise even in PDEs, and even with a stochastic source.

\begin{figure*}[!ht]   
\centering
\subfloat[Solution]{\includegraphics[width=0.45\textwidth, keepaspectratio]{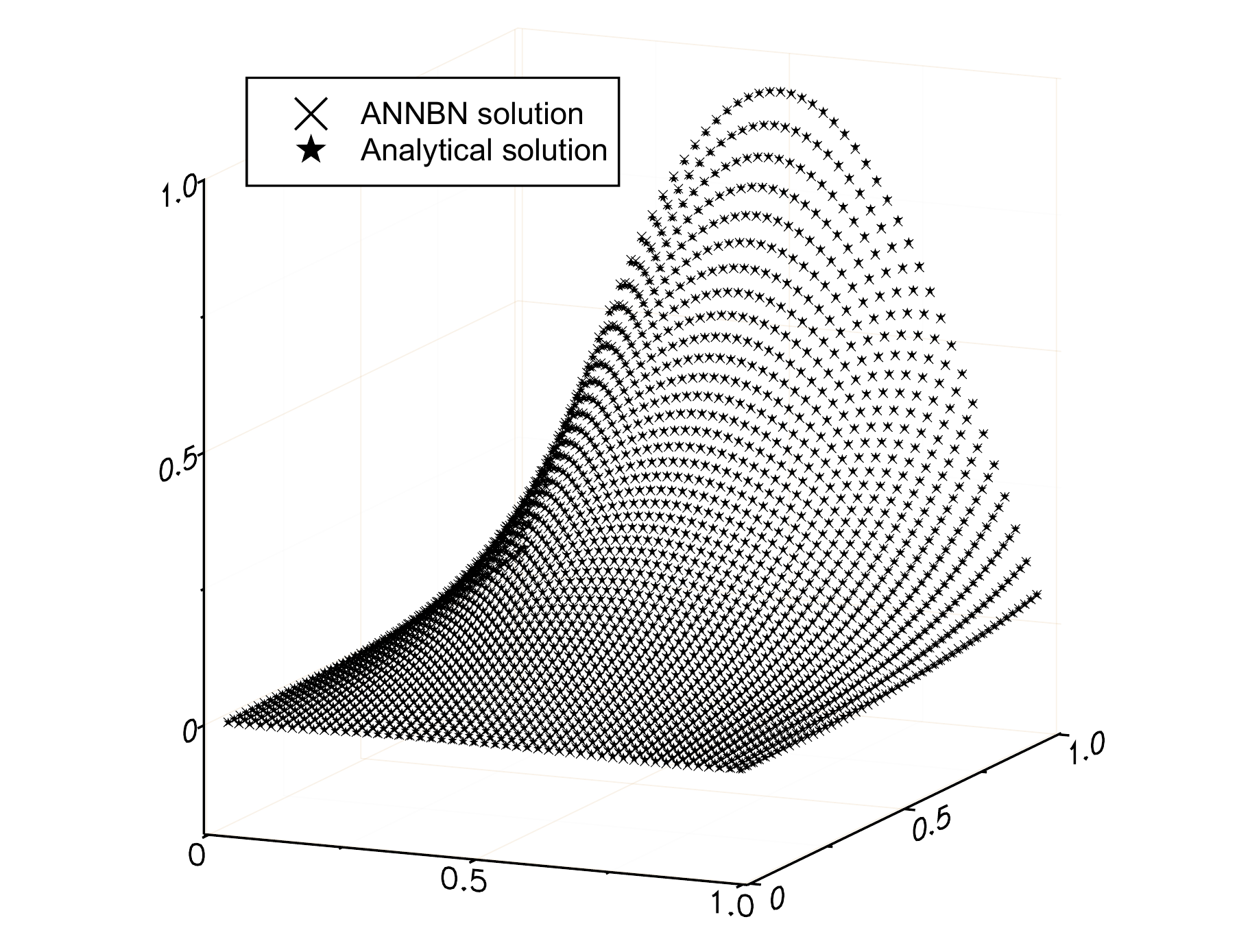}\label{fig:PDE-Laplace}}
\subfloat[Partial Derivatives]{\includegraphics[width=0.45\textwidth, keepaspectratio]{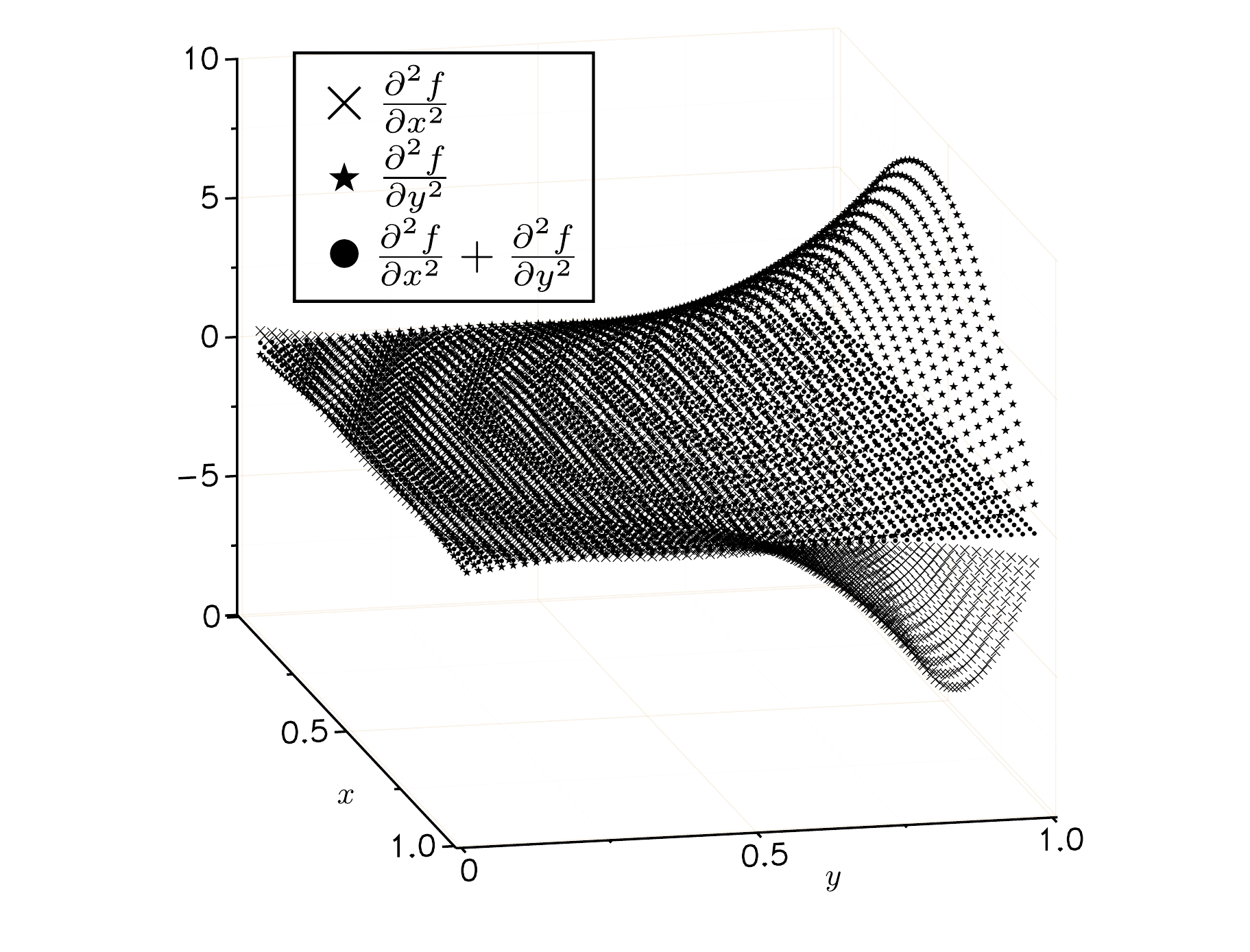}\label{fig:PDE-Laplace-der}}
\caption[Optional caption for list of figures 5-8]{ANNbN solution of Laplace's Equation with stochastic source.}
\label{fig:PDE}
\end{figure*}

\section{Discussion and Conclusions}
\label{sec:Discussion}
As described in the formulation of the proposed method, we may use a variety of ANNbNs, such as Sigmoid or Radial Basis Functions scheme, Ensembles of ANNbNs, Deep ANNbNs, etc. The method adheres to the theory of function approximation with ANNs, as per Visual representations of ANNs' capability to approximate continuous functions \cite{Nielsen2015,rojas2013neural}. We explained the implementation of the method in the presented illustrative examples, which may be reproduced with the provided computer code. In general, Sigmoid functions are faster, RBFs more accurate and Ensembles of either sigmoid of RBFs handle better the noisy datasets. RBFs, may use smaller than $N=\lfloor\frac{m}{n+1}\rfloor$ sized matrices, and hence approximate datasets with limited observations and a lot of features. The overall results are stimulating in terms of speed and accuracy, compared with state-of-the-art methods in the literature. 

The approximation of the partial derivatives and solution of PDEs, with or without noisy source, in a fast and accurate setting, offers a solid step towards the unification of Artificial Intelligence Algorithms with Numerical Methods and Scientific Computing. Future research may consider the implementation of ANNNs to specific AI applications such as Face Recognition, Reinforcement Learning, Text Mining, etc., as well as Regression Analyses, Predictions, and solutions of other types of PDEs. Furthermore, the investigation of other sigmoid functions than the logistic, such as $\tanh, \arctan, \text{erf}, \text{softmax}$, etc., as well as other RBFs, such as multiquadrics, integrated, etc., and the selection of an optimal shape parameter for even higher accuracy, are also of interest. Finally, although the weights' computation is whopping fast, the algorithm may easily be converted to parallel, as the weights' computation for each neuron regards the $N$ times inversion of matrices $\mathbf X_k$.

Interpretable AI is a modern demand in Science, and ANNbNs are inherently suitable for this purpose, as by checking the approximation errors of the neurons in each cluster, one may retrieve information for the local accuracy, as well as local and global non-linearities in the data properties. Furthermore, as demonstrated in the examples, the method is proficient for small datasets, without over-fitting, by approximating the signal and not the noise, which is a common problem of ANNs.


\section{Appendix I: Computer Code}
The presented method is implemented in Julia \cite{bezanson2017julia} Language. The corresponding computer code, is available on 
\url{https://github.com/nbakas/ANNbN.jl}

\nomenclature{${{w}_{jk}}$}{weights of first layer for $j^{th}$ neuron, in $k^{th}$ cluster}
\nomenclature{$m$}{number of observations}
\nomenclature{$n$}{number of variables (features)}
\nomenclature{${{x}_{ij}}$}{given data $(m\times n)$} 
\nomenclature{$N$}{number of clusters, equal to number of neurons}
\nomenclature{${{v}_{k}},{{b}_{0}}$}{approximation weights and bias for the external layer}
\nomenclature{$y_{ik}$}{local observations in $k^{th}$ cluster}
\nomenclature{$m_k$}{number of observations found in $k^{th}$ cluster}
\nomenclature{$\mathbf X_k$}{local matrix of observations (in $k^{th}$ cluster)}
\nomenclature{$x_{ijk}$}{observations in the $k^{th}$ cluster}
\nomenclature{$i,j,k$}{iterators for observations, features, and clusters}
\nomenclature{$\mathbf x$}{out-of-sample point in $\mathbb{R}^n$}
\nomenclature{$CC$}{Correct Classified Observations}
\nomenclature{$n_f$}{number of folds for Ensembles}
\nomenclature{$b_k$}{bias for $k^{th}$ neuron}
\nomenclature{$L$}{number of Layers}

\printnomenclature[1cm]

\bibliographystyle{IEEEtran}
\bibliography{refs}

\end{document}